\title{Symbolic Querying of Vector Spaces: \\
Probabilistic Databases Meets Relational Embeddings}
\author{{\bf Tal Friedman} \\
Department of Computer Science \\
University of California, Los Angeles \\
\texttt{tal@cs.ucla.edu}
\And
{\bf Guy Van den Broeck} \\
Department of Computer Science \\
University of California, Los Angeles \\
\texttt{guyvdb@cs.ucla.edu}} 
\newcommand{\pdb}{\mathcal{P}}
\newcommand{\rf}{\mathcal{R}}
\newcommand{\ef}{\mathcal{E}}
\newcommand{\df}{\mathcal{D}}
\newcommand{\kb}{\mathcal{K}}
\newcommand{\Query}{\ensuremath{\mathsf{Q}}\xspace}
\newcommand{\ax}[1]{\ensuremath{\left<#1\right>}\xspace}
\newcommand{\TO}{\textsc{TractOR}}
\newcommand{\TOP}{\textsc{TractOR+}}
\newcommand{\TOs}{\TO\ }
\newcommand{\TOPs}{\TOP\ }
\newtheorem{theorem}{Theorem}
\theoremstyle{definition}
\newtheorem{definition}{Definition}
\begin{document}

\maketitle
\begin{abstract}
We propose unifying techniques from probabilistic databases and relational embedding models with the goal of performing complex queries on incomplete and uncertain data. We formalize a probabilistic database model with respect to which all queries are done. This allows us to leverage the rich literature of theory and algorithms from probabilistic databases for solving problems. While this formalization can be used with any relational embedding model, the lack of a well-defined joint probability distribution causes simple query problems to become provably hard. With this in mind, we introduce \TO, a relational embedding model designed to be a tractable probabilistic database, by exploiting typical embedding assumptions within the probabilistic framework. Using a principled, efficient inference algorithm that can be derived from its definition, we empirically demonstrate that \TOs is an effective and general model for these querying tasks.

\end{abstract}

\section{INTRODUCTION}
Relational database systems are ubiquitous tools for data management due to their ability to answer a wide variety of queries. In particular, languages such as SQL allow one to take advantage of the relational structure of the data to ask complicated question to learn, analyse, and draw conclusions from data. However, traditional database systems are poorly equipped to deal with uncertainty and incompleteness in data. Meanwhile, techniques from the machine learning community can successfully make predictions and infer new facts. In this work we marry ideas from both machine learning and databases to provide a framework for answering such queries while dealing with uncertain and incomplete data.

The first key question we need an answer for when dealing with uncertain relational data is how to handle the fact that our data is invariably incomplete. That is, there will always be facts that we do not explicitly see, but would like to be able to infer. In the machine learning community, this problem is known as \emph{link prediction}, a task which has garnered a lot of attention in recent years~\citep{Nickel2015ARO,Nickel2011ATM,Kazemi2018SimplEEF, Trouillon2016ComplexEF} using a variety of techniques~\citep{Blockeel1997TopdownIO,Dumancic2019ACS}. Recently, the most common techniques for this problem are relational embedding models, which embed relations and entities as vectors and then use a scoring function to predict whether or not facts are true. While these techniques are popular and have proven effective for link prediction, they lack a consistent underlying probabilistic semantics, which makes their beliefs about the world unclear. As a result, investigations into them have rarely gone beyond link prediction~\citep{Hamilton2018EmbeddingLQ,Krompass2014QueryingFP}.

On the other hand, the databases community has produced a rich body of work for handling uncertainty via probabilistic databases (PDBs). In contrast to relational embedding models which are fundamentally predictive models, PDBs \citep{Suciu:2011:PD:2031527,VdBFTDB17} are defined by a probabilistic semantics, with strong and clearly specified independence assumptions. With these semantics, PDBs provide us with a wealth of theoretical and algorithmic research into complex queries, including tractability results \citep{Dalvi2004EfficientQE, Dalvi2007TheDO, Dalvi2012TheDO, Fink2016DichotomiesFQ} and approximations~\citep{Heuvel2019AnytimeAI,Gribkoff2016SlimShotIP}. Recently there has even been work in finding explanations for queries \citep{Ceylan2017MostPE, gribkoff2014most}, and querying subject to constraints \citep{Borgwardt2017OntologyMediatedQF, Bienvenu2016OntologyMediatedQA, FriedmanIJCAI19}. Where PDBs fall short is in two major areas. Firstly, populating PDBs with meaningful data in an efficient way remains a major challenge, due to their brittleness to incomplete data, and due to their disconnect from the statistical models that can provide these databases with probability values. Secondly, while querying is well understood, certain types of desirable queries are provably hard under standard assumptions~\citep{Dalvi2012TheDO}.

In this work, our goal will be to unify the predictive capability of relational embedding models with the sound underlying probabilistic semantics of probabilistic databases. The central question then becomes how should we do this unification such that we maintain as many of the benefits of each as possible, while finding ways to overcome their limitations. As we will discover in Section~\ref{sec:relpdb}, this is not a question with an obvious answer. The straightforward option is to simply convert the relational embedding model's prediction into probabilities, and then use these to populate a probabilistic database. While this does give us a meaningful way to populate a PDB, the resulting model is making some clearly problematic independence assumptions, and moreover still struggles with making certain queries tractable.

At its core, the reason this straightforward solution is ineffective is as follows: while both PDBs and relational embedding models make simplifying assumptions, these assumptions are not being taken into account \emph{jointly}. Each is treating the other as a black box. To overcome this, we incorporate the factorization assumption made by many relational embedding models \citep{Yang2014EmbeddingEA,Nickel2011ATM} directly into our probabilistic database. The resulting model, which we call \TO, thus takes advantages of the benefits of both: it can efficiently and accurately predict missing facts, but it also provides a probabilistic semantics which we can use for complex probabilistic reasoning. Due to its factorization properties, \TOs can even provide efficient reasoning where it was previously difficult in a standard PDB.

The rest of the paper is organized as follows. Section~\ref{sec:pdb} provides the required technical background on PDBs and their associated queries. In Section~\ref{sec:relpdb} we discuss using (tuple-independent) PDBs as the technical framework for relational embedding models, as well as giving a brief formalization and discussion of challenges. Then, in Section~\ref{sec:tractor} we introduce \TO, a relational embedding model designed around PDBs to allow for a large range of efficient queries. Section~\ref{sec:eval} provides an empirical evaluation of \TOs.  Finally, Section~\ref{sec:disc} gives a broad discussion on related work along with ties to future work.

\section{PROBABILISTIC DATABASES} \label{sec:pdb}
We now provide the necessary technical background on probabilistic databases, which will serve as the foundation for our probabilistic semantics and formalism for queries, as well as the underlying inspiration for \TO.
\subsection{RELATIONAL LOGIC AND DATABASES}
We begin with necessary background from \emph{function-free finite-domain} first-order logic. An atom $R(x_1,x_2,...,x_n)$ consists of a predicate $R$ of arity $n$, together with $n$ arguments. These arguments can either be \emph{constants} or \emph{variables}. A \emph{ground atom} is an atom that contains no variables. A \emph{formula} is a series of atoms combined with conjunctions ($\land$) or disjunctions ($\lor$), and with quantifiers $\forall, \exists$. A \emph{substitution} $\Query[x/t]$ replaces all occurrences of $x$ by $t$ in a formula $\Query$.

A relational \emph{vocabulary} $\sigma$ is composed of a set of predicates $\rf$ and a domain $\df$. Using the \emph{Herbrand semantics} \citep{hinrichs2006herbrand}, the \emph{Herbrand base} of $\sigma$ is the set of all ground atoms possible given $\rf$ and $\df$. A $\sigma$-interpretation $\omega$ is then an assignment of truth values to every element of the Herbrand base of $\sigma$. We say that $\omega$ is a \emph{model} of a formula $\Query$ whenever $\omega$ satisfies $\Query$. This is denoted by $\omega \models \Query$.

Under the standard model-theoretic view \citep{Abiteboul1995FoundationsOD}, a relational database for a vocabulary $\sigma$ is a $\sigma$-interpretation $\omega$. In words: a relational database is a series of relations, each of which corresponds to a predicate. These are made up by a series of rows, also called \emph{tuples}, each of which corresponds to a ground atom being true. Any atom not appearing as a row in the relation is considered to be \textit{false}, following the closed-world assumption \citep{reiter1981closed}. Figure~\ref{fig:exdb} shows an example database.

\subsection{PROBABILISTIC DATABASES}
To incorporate uncertainty into relational databases, \emph{probabilistic databases} assign each tuple a probability~\citep{Suciu:2011:PD:2031527,VdBFTDB17}.

\begin{definition}
 A \emph{(tuple-independent) probabilistic database} (PDB) $\pdb$ for a vocabulary $\sigma$ is a finite set of tuples of the form $\langle{t: p}\rangle$ where $t$ is a $\sigma$-atom and $p\in [0,1]$. Furthermore, each $t$ can appear at most once. 
\end{definition}

Given such a collection of tuples and their probabilities, we are now going to define a \emph{distribution} over relational databases. The semantics of this distribution are given by treating each tuple as an independent random variable.

\begin{definition}
  A PDB $\pdb$ for vocabulary $\sigma$ induces a probability distribution over $\sigma$-interpretations $\omega$:
  \begin{align*}
     & P_\pdb(\omega) = \prod_{t \in \omega} P_\pdb(t) \prod_{t \notin \omega} (1-P_\pdb(t))\\
     & \text{ where~~~}  P_\pdb(t) = 
      \begin{cases}
      p & \textnormal{if } \langle t:p \rangle \in \pdb \\
      0 & \textnormal{otherwise}
      \end{cases}
  \end{align*}
\end{definition}

Each tuple is treated as an independent Bernoulli random variable, so the probability of a relational database instance is given as a simple product, based on which tuples are or are not included in the instance.
\begin{figure}[tb]
\centering
\setlength{\tabcolsep}{3pt}
\begin{tabular}{l}
\toprule
 \multicolumn{1}{c}{Scientist} \\  
\midrule
Einstein \\
Erd\H{o}s \\
von Neumann \\
\bottomrule
\end{tabular}
\qquad
\setlength{\tabcolsep}{3pt}
\begin{tabular}{l l}
\toprule
 \multicolumn{2}{c}{CoAuthor}  \\
\midrule
 Einstein & Erd\H{o}s \\
 Erd\H{o}s & von Neumann \\
\bottomrule
\end{tabular}
\caption{Example relational database. Notice that the first row of the right table corresponds to the atom CoAuthor(Einstein, Erd\H{o}s). \label{fig:exdb}}
\end{figure}

\begin{figure}[tb]
\centering
\setlength{\tabcolsep}{3pt}
\scalebox{0.85}{
\begin{tabular}{l  c}
\toprule
 \multicolumn{1}{c}{Scientist} & $\Pr$   \\
\midrule
Einstein & 0.8 \\
Erd\H{o}s & 0.8 \\
von Neumann & 0.9 \\
Shakespeare & 0.2 \\
\bottomrule
\end{tabular}
}
\qquad
\setlength{\tabcolsep}{3pt}
\scalebox{0.85}{
\begin{tabular}{l l  c}
\toprule
 \multicolumn{2}{c}{CoAuthor} & $\Pr$ \\
\midrule
 Einstein & Erd\H{o}s & 0.8 \\
 Erd\H{o}s & von Neumann & 0.9 \\
 von Neumann & Einstein & 0.5 \\
\bottomrule
\end{tabular}
}
\caption{Example probabilistic database. Tuples are now of the form $\langle t:p \rangle$ where $p$ is the probability of the tuple $t$ being present. These tuples are assumed to be independent, so the probability both Einstein and Erd\H{o}s are scientists is $0.8\cdot 0.8 = 0.64$. \label{fig:expdb}}
\vspace{-3mm}
\end{figure}

\subsection{PROBABILISTIC QUERIES}
Much as in relational databases, in probabilistic databases we are interested in answering queries -- the difference being that we are now interested in probabilities over queries. In particular, we study the theory of queries that are fully quantified and with no free variables or cosntants, also known as fully quantified \emph{Boolean queries} -- we will see later how other queries can be reduced to this form. On a relational database, this corresponds to a fully quantified query that has an answer of True or False. 

For example, on the database given in Figure~\ref{fig:exdb}, we might ask if there is a scientist who is a coauthor:
$$ \Query_1 = \exists x. \exists y. S(x) \land \mathit{CoA}(x,y)$$

Which there clearly is, by taking $x$ to be Einstein and $y$ to be Erd\H{o}s. If we instead asked this query of the PDB in Figure~\ref{fig:expdb}, we would be computing the probability by summing over the worlds in which the query is true:
$$ P_\pdb(\Query_1) = \sum_{\omega \models \Query_1} P_\pdb(\omega) $$

Queries of this form that are a conjunction of atoms are called \emph{conjunctive queries}. They are commonly shortened as:
$$ \Query_1 = S(x), \mathit{CoA}(x,y).$$

A disjunction of conjunctive queries is known as a \emph{union of conjunctive queries} (UCQ). While they capture a rather complex set of queries, the algorithmic landscape of UCQs is remarkably well understood.

\begin{theorem} \citet{Dalvi2012TheDO} Let $\Query$ be a UCQ and $\pdb$ be a tuple-independent probabilistic database. Then the query $\Query$ is either:
  \begin{itemize}
  \item \emph{Safe}: $P_\pdb(\Query)$ can be computed in time polynomial in $|\pdb|$ for all probabilistic databases $\pdb$ using the standard lifted inference algorithm (see Section~\ref{par:queryeval});
  \item \emph{Unsafe}: Computing $P_\pdb(\Query)$ is a $\#P$-hard problem.
  \end{itemize}
  Furthermore, we can efficiently determine whether $\Query$ is safe or unsafe.
  
\end{theorem}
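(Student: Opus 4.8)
The plan is to prove this dichotomy theorem by combining two complementary arguments: one establishing the tractability of safe queries through the lifted inference algorithm, and one establishing $\#P$-hardness of unsafe queries via reduction from a known hard counting problem. The core of the proof rests on a syntactic characterization of UCQs that precisely separates the two cases, along with a decidable procedure to test which category a given query falls into.

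**First I would** set up the lifted inference algorithm for computing $P_\pdb(\Query)$ compositionally over the structure of the query. The key operations are: (i) independent-join and independent-union rules that exploit the tuple-independence assumption to factor $P_\pdb(\Query_1 \land \Query_2)$ or $P_\pdb(\Query_1 \lor \Query_2)$ as products when the subqueries share no relational symbols (syntactic independence); (ii) an inclusion-exclusion rule to handle unions with shared atoms; and (iii) a decomposition rule for quantifiers that, when a variable is a \emph{separator} (appearing in every atom of the query), lets us write $P_\pdb(\exists x.\,\Query) = 1 - \prod_{a \in \df}\bigl(1 - P_\pdb(\Query[x/a])\bigr)$, reducing to a polynomial number of queries over a smaller domain. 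A query is declared \emph{safe} exactly when this recursive procedure always finds an applicable rule and terminates; each rule multiplies the running time by at most a polynomial factor in $|\pdb|$, so safety yields a polynomial-time bound by induction on query structure.

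**Next I would** handle the hardness direction. When none of the lifted rules apply, the query must contain a minimal obstructing pattern; the canonical witness is a query whose lineage computation is provably as hard as counting models of a monotone 2-DNF formula (equivalently, computing the probability of a bipartite positive partitioned formula), which is the standard $\#P$-complete problem used in this literature. The reduction encodes an arbitrary such formula into a probabilistic database instance so that $P_\pdb(\Query)$ recovers the weighted model count; I would show that the failure of every decomposition rule to apply forces exactly the structure needed to embed this hard instance. Finally, I would argue decidability of the safe/unsafe test: the algorithm itself \emph{is} the decision procedure, since one simply attempts the recursive decomposition symbolically (without touching the data) and checks whether it succeeds, and this symbolic check runs in time depending only on the query size.

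**The hard part will be** establishing that the syntactic failure of the lifted rules \emph{necessarily} produces a $\#P$-hard instance — that is, proving completeness of the algorithm so that there is no gap between the safe and unsafe cases. This requires a delicate analysis showing that when no separator variable exists and the query cannot be factored, one can always isolate a hard sub-pattern and carry out the reduction, ruling out the possibility of some cleverer polynomial-time method succeeding where the lifted rules fail. This dichotomy-closing step is the genuinely substantive content; the tractability direction and the decidability claim are comparatively routine once the algorithm is specified. Since this is the central result of \citet{Dalvi2012TheDO}, I would cite their full argument for this completeness step rather than reproduce it.
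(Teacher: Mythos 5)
Your proposal is correct and follows essentially the same route the paper takes: the tractability direction is exactly the lifted inference procedure the paper presents as Algorithm~\ref{alg:LiftR} (decomposable conjunction/disjunction, inclusion-exclusion with cancellations, separator variables), the hardness direction is the standard reduction from counting models of positive partitioned 2-DNF anchored at queries like $H_0$, and the dichotomy-closing completeness argument is, as in the paper, deferred to \citet{Dalvi2012TheDO}. No gaps beyond what you already flag and appropriately cite.
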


In much of the literature of probabilistic databases \citep{Suciu:2011:PD:2031527,Dalvi2012TheDO}, as well as throughout this paper, UCQs (and consequently conjunctive queries) are the primary query object studied.

\subsubsection{Reduction to Fully Quantified Boolean Queries}
In general, one is not always interested in computing fully quantified queries. For example, in Section~\ref{sec:eval} one of the queries we are interested in computing will be of the form
\begin{equation}
  \exists x, y. R(A,x) \land S(x,y) \land T(y,B)
\end{equation}

For relations $R,S,T$ and constants $A,B$. To convert this query to a fully quantified one, we need to \emph{shatter} the query \citep{VdBFTDB17}. In this case, we replace the binary relation $R(A,x)$ by the unary query $R_A(x)$, where $\forall x. R_A(x) = R(A,x)$. A similar procedure for $T$ gives us the following query:
\begin{equation}
  H_0 = \exists x,y. R_A(x) \land S(x,y) \land T_B(y)
\end{equation}

This is now a fully quantified query, and is also a simple example of an unsafe query. That is, for an arbitrary probabilistic database $\pdb$ we cannot compute $P_\pdb(Q)$ in time polynomial in $|\pdb|$ given our current independence and complexity assumptions.

\subsubsection{Efficient Query Evaluation}\label{par:queryeval}
In addition to providing an underlying probabilistic semantics, one of the motivations for exploring probabilistic databases as the formalism for relational embedding models was to be able to evaluate complex queries efficiently. Algorithm~\ref{alg:LiftR} does this in polynomial time for all safe queries. We now explain the steps in further detail. 

\renewcommand\algorithmicthen{}
\begin{algorithm}[tb]
\caption{$\bf Lift^R(\Query,\pdb)$, abbreviated by ${\bf L}(\Query)$ \label{alg:LiftR}}

  \begin{algorithmic}[1]
    \Require UCQ $\Query$ , prob.~database $\pdb$ with constants $T$.
    \Ensure The probability $P_{\pdb}(\Query)$
     \State {\textbf{Step 0}}  ~\emph{Base of Recursion}
     \Indent
     \If{\Query is a single ground atom $t$ }
                \If{$\ax{t:p} \in \pdb$} \Return $p$ \textbf{else} \textbf{return} $0$
       \EndIf 
    \EndIf  
    \EndIndent 
    \State {\textbf{Step 1}} ~\emph{Rewriting of Query}
     \Indent    
    \State Convert \Query to conjunction of UCQ: $\Query_{\land}\!\!=\!\Query_1   \land  \cdots  \land \Query_m$
    \EndIndent     
    \State \textbf{Step 2}  ~\emph{Decomposable Conjunction}    
    \Indent    
     \If{$m>1$  and $\Query_{\land}=\Query_1 \land \Query_2$ where $\Query_1 \perp \Query_2$}
         \State  \Return ${\bf L}(\Query_1) \cdot {\bf L}(\Query_2)$
    \EndIf
     \EndIndent     
  \State \textbf{Step 3}  ~\emph{Inclusion-Exclusion}    
    \Indent  
    \If{$m>1$ but $\Query_{\land}$ has no independent $\Query_i$}
    \State ~\emph{(Do Cancellations First)}
         \State \Return $\sum_{s \subseteq [m]}(-1)^{|s|+1} \cdot {\bf L}\left(\bigvee_{i \in s} \Query_i\right)$
          \EndIf
     \EndIndent   
    \State \textbf{Step 4}  ~\emph{Decomposable Disjunction}    
    \Indent    
     \If{$\Query=\Query_1 \vee \Query_2$ where $\Query_1 \perp \Query_2$}
     \State \Return $1- \left(1-{\bf L}(\Query_1)\right) \cdot \left(1-{\bf L}(\Query_2)\right)$
    \EndIf
     \EndIndent    
    \State \textbf{Step 5}  ~\emph{Decomposable Existential Quantifier}         
    \Indent    
     \If{\Query has a \emph{separator variable} $x$}
     \State \Return $1-\prod_{c \in T} \left(1-{\bf L}(\Query[x/c])\right)$ \label{line:separator}
    \EndIf
     \EndIndent   
    \State \textbf{Step 6}  ~\emph{Fail}  (the query is \#P-hard)     
  \end{algorithmic}
\end{algorithm} 

We begin with the assumption that $\Query$ has been processed to not contain any constant symbols, and that all variables appear in the same order in repeated predicate occurrences in $\Query$.  This can be done efficiently~\citep{Dalvi2012TheDO}.

\emph{Step 0} covers the base case where $\Query$ is simply a tuple, so it looks it up in $\pdb$. \emph{Step 1} attempts to rewrite the UCQ into a conjunction of UCQs to find decomposable parts. For example, the UCQ $(R(x) \land S(y,z)) \lor (S(x,y) \land T(x))$ can be written as the conjunction of $(R(x)) \lor (S(x,y) \land T(x))$ and $(S(y,z)) \lor (S(x,y) \land T(x))$. When multiple conjuncts are found this way, there are two options. If they are symbolically independent (share no symbols, denoted $\perp$), then \emph{Step 2} applies independence and recurses. Otherwise, \emph{Step 3} recurses using the inclusion-exclusion principle, performing cancellations first to maintain efficiency \citep{Dalvi2012TheDO}. If there is only a single UCQ after rewriting, \emph{Step 4} tries to split it into independent parts, applying independence and recursing if anything is found.

Next, \emph{Step 5} searches for a \emph{separator} variable, one which appears in every atom in $\Query$. If $x$ is a separator variable for $\Query$, and $a,b$ are different constants in the domain of $x$, this means that $\Query[x/a]$ and $\Query[x/b]$ are independent. This independence is again recursively exploited. Finally, if \emph{Step 6} is reached, then the algorithm has failed and the query provably cannot be computed efficiently \citep{Dalvi2012TheDO}, under standard complexity assumptions.

\section{RELATIONAL EMBEDDINGS AS PROBABILISTIC DATABASES} \label{sec:relpdb}
We now tackle the primary goal of this work: to use probabilistic databases as the formalism for doing probabilistic reasoning with relational embeddings. We begin with some background.

\subsection{RELATIONAL EMBEDDING MODELS}
Suppose we have a knowledge base $\kb$ consisting of triples $(h_i, R_i, t_i)$, denoting a head entity, relation, and tail entity (equivalently $R_i(h_i,t_i)$ in probabilistic database notation). Relational embedding models aim to learn continuous representations for both entities and relations, which together can be used to predict the presence of a triple. More formally:
\begin{definition} \label{def:embed}
  Suppose we have a knowledge base $\kb$ consisting of triples $(h_i, R_i, t_i)$, with entities $\ef$ and relations $\rf$. Then a \emph{relational embedding model} consists of
  \begin{itemize}
  \item Real vectors $v_R, v_e$ for all relations $R \in \rf$ and entities $e \in \ef$
    \item A scoring function $f(v_h,v_R,v_t) \to \mathbb{R}$ which induces a ranking over triples 
  \end{itemize}
\end{definition}
In general, these vectors may need to be reshaped into matrices or tensors before the scoring function can be applied. Table~\ref{tab:relscores} gives some examples of models with the form their vector representations take, as well as their scoring functions.

\begin{table*}[th]
  \caption{Example relational embedding scoring functions for $d$ dimensions}
  \centering
    \begin{tabular}{|c|c|c|c|}
    \hline
     Method & Entity Embedding & Relation Embedding & Triple Score \\
    \hline
    TransE~\citep{Bordes2013TranslatingEF} & $v_h, v_t \in \mathbb{R}^d$ & $v_R \in \mathbb{R}^d$ & $|| v_h + v_R - v_t ||$ \\
    \hline
    DistMult~\citep{Yang2014EmbeddingEA} &$v_h, v_t \in \mathbb{R}^d$& $v_R \in \mathbb{R}^d$ & $\langle v_h, v_R, v_t\rangle$  \\
    \hline
    Rescal~\citep{Nickel2011ATM} &$v_h, v_t \in \mathbb{R}^d$& $v_R \in \mathbb{R}^{d \times d}$ & $v^T_h v_R v_t$  \\
    \hline
    ComplEx~\citep{Trouillon2016ComplexEF} &$v_h, v_t \in \mathbb{C}^d$&$v_R \in \mathbb{C}^d$& Re($\langle v_h, v_R, \Bar{v_t} \rangle$) \\
    \hline
   \end{tabular}
  \label{tab:relscores}
\end{table*}

\subsection{PROBABILISTIC INTERPRETATIONS OF RELATIONAL EMBEDDINGS}
\begin{figure}[tb]
\vspace{-3mm}
\centering
\setlength{\tabcolsep}{3pt}
\scalebox{1.0}{
\begin{tabular}{l l  c}
\toprule
 \multicolumn{2}{c}{$R(x,y)$} & Score   \\
\midrule
$A$ & $B$ & -0.6 \\
$B$ & $C$ & 0.2 \\
$A$ & $C$ & 2.3 \\
\bottomrule
\end{tabular}
}
$\implies$
\setlength{\tabcolsep}{3pt}
\scalebox{1.0}{
\begin{tabular}{l l  c}
\toprule
 \multicolumn{2}{c}{$R(x,y)$} & $\Pr$ \\
\midrule
 $A$&$B$ & 0.35 \\
 $B$&$C$ & 0.55 \\
 $A$&$C$ & 0.91 \\
\bottomrule
\end{tabular}
}
\caption{An example of mapping a relational embedding to a probabilistic database using the sigmoid function.}
\label{fig:sc2prob}
\end{figure}
Given a relational embedding model from Definition~\ref{def:embed}, if we want to give it a clear probabilistic semantics using our knowledge of probabilistic databases from Section~\ref{sec:pdb}, we need to find a way to interpret the model as a probability distribution.

The simplest approach is to choose some mapping function $g:\mathbb{R} \to [0,1]$ which converts all the scores produced by the model's scoring function into probabilities. This provides us marginal probabilities, but no obvious joint distribution. Again, we can make the simplest choice and interpret these probabilities as being independent. That is, we can construct a probabilistic database where the probabilities are determined using our mapping function. Figure~\ref{fig:sc2prob} gives an example of such a conversion, using the sigmoid function as the mapping.

After doing this conversion, we can directly use Algorithm~\ref{alg:LiftR} to efficiently evaluate any safe query. This is a step in the right direction, but there are still two big issues here: firstly, as a simplifying assumption this triple-independence presents potential issues as discussed in \citet{Meilicke2019AnytimeBR}. For example, suppose we have a relational model containing \textit{Works-In(Alice, London)} and \textit{Lives-In(Alice, London)}: clearly these triples should not be independent. The second issue, which is perhaps even more critical for our purposes, is that even this assumption is not sufficient for all queries to be tractable:

\begin{theorem}\label{thm:relhard}
   Suppose we have a knowledge base $\kb$ with entities $\ef$ and relations $\rf$. Then, suppose we have a mapping function $g$ and a relational embedding model represented by a scoring function $f$ which is fully expressive. That is, for any configuration of marginal probabilities $P(R(h,t))$  over all possible triples, there is some assignment of entity and relation vectors such that $\forall R,h,t.\ \ g(f(v_h, v_R, v_t)) = P(R(h,t)) $.
   
   Then for any unsafe query $\Query$, evaluating $P(\Query)$ is a $\#P$-hard problem.
\end{theorem}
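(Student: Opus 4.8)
The plan is to show that a fully expressive embedding model, once interpreted via $g$ together with the triple-independence assumption, represents \emph{exactly} the class of all tuple-independent PDBs over the vocabulary $\sigma$ induced by $\ef$ and $\rf$. The claimed $\#P$-hardness then follows by transferring the dichotomy of \citet{Dalvi2012TheDO} onto this class. So the whole argument is a reduction: arbitrary tuple-independent PDB query evaluation reduces to embedding-model query evaluation, because the latter subsumes the former.

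First I would unfold the definitions. Combining the mapping function $g$ with triple-independence, a relational embedding model induces a tuple-independent PDB $\pdb$ whose marginals are exactly $P_\pdb(R(h,t)) = g(f(v_h,v_R,v_t))$. Hence every distribution the model expresses is a tuple-independent PDB, and answering $\Query$ on the model means computing $P_\pdb(\Query) = \sum_{\omega \models \Query} P_\pdb(\omega)$ with respect to this induced $\pdb$. The heart of the argument is then the full-expressiveness hypothesis, which states precisely that for every configuration of triple marginals there is a vector assignment realizing it. Equivalently, the map sending embeddings to their induced PDBs is surjective onto the set of all tuple-independent PDBs on $\sigma$. Consequently the problem of computing $P(\Query)$ over fully expressive embedding models and the problem of computing $P_\pdb(\Query)$ over arbitrary tuple-independent $\pdb$ range over the same set of distributions, and since $\Query$ is unsafe the latter is $\#P$-hard by the dichotomy theorem quoted above.

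The step I expect to require the most care is making this reduction genuinely polynomial-time rather than a mere statement about equal expressive power. The Dalvi--Suciu hardness reduction, run on an instance of a canonical $\#P$-hard counting problem, outputs a tuple-independent PDB $\pdb$ of size polynomial in the instance; to finish I must produce, in polynomial time, vectors $(v_e)_{e\in\ef}$ and $(v_R)_{R\in\rf}$ whose induced PDB is $\pdb$. Full expressiveness guarantees such vectors exist but a priori says nothing about their bit-size or the cost of finding them. I would close this gap in one of two ways. Since $P_\pdb(\Query)$ depends on the model only through its marginals $g(f(v_h,v_R,v_t))$, I can treat the induced marginals themselves as the encoding of the input instance and invoke the dichotomy hardness directly on that representation. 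Alternatively, for concrete fully expressive models such as ComplEx one can cite the known constructions realizing any marginal configuration with $O(|\ef|)$ dimensions, so that the realizing embedding is of size polynomial in $|\pdb|$ and is computed explicitly, keeping the reduction within polynomial time.
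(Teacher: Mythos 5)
The paper gives no explicit proof of this theorem; it is implicitly treated as an immediate corollary of the \citet{Dalvi2012TheDO} dichotomy, which is exactly the reduction you spell out: full expressiveness means the induced tuple-independent PDBs range over \emph{all} tuple-independent PDBs on the vocabulary, so evaluating an unsafe $\Query$ on the model subsumes evaluating it on an arbitrary PDB. Your argument is correct and matches the intended one; your closing observation about input representation (hardness should be stated with respect to the induced marginals, since full expressiveness alone does not bound the cost or bit-size of a realizing embedding) is a genuine subtlety the paper glosses over, and your first fix --- taking the marginals themselves as the instance encoding --- is the right way to make the statement precise.
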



\section{\TO}\label{sec:tractor}
The main takeaway from Section~\ref{sec:relpdb} is that although useful, interpreting relational embedding models as providing marginals for probabilistic databases still has major challenges. While we do now have a probabilistic semantics for our relational embedding model, the fact that we used the model as a black box means that we wind up treating all triples as independent.
The resulting expressiveness and tractability limitations motivate the search for a model which will not be treated as a black box by our probabilistic database semantics. Rather than simply having an arbitrary statistical model which fills in our probabilistic database, we would like to actually exploit properties of this statistical model. To put it another way: a fundamental underpinning of relational embedding models such as DistMult~\citep{Yang2014EmbeddingEA} or TransE~\citep{Bordes2013TranslatingEF} is that they make simplifying assumptions about how entity and relation vectors relate to link prediction. In Section~\ref{sec:relpdb}, our probabilistic interpretations of these models had no way of knowing about these simplifying assumptions: now we are going to express them in the language of PDBs.

\subsection{FACTORIZING IN PROBABILISTIC DATABASES}\label{subsec:fac}
Relational embedding models such as DistMult~\citep{Yang2014EmbeddingEA} and ComplEx~\citep{Trouillon2016ComplexEF}, or indeed any model derived from the canonical Polyadic decomposition~\citep{HitchcockCP1927} are built on an assumption about the way in which the tensor representing all triples factorizes. A similar idea has been used in the context of probabilistic first-order logic, where Boolean matrices representing binary relations are rewritten in terms of unary relations to make inference tractable \citep{Broeck2013OnTC}. We will now apply this technique of rewriting binary relations into unary relations as the basis for our relational embedding model.

Suppose we have a binary relation $R(x,y)$, and our model defines a single random variable $E(x)$ for each entity $x \in \ef$ as well as a random variable $T(R)$ for relation $R$. Then we assume that the relation $R$ decomposes in the following way:
\begin{equation} \label{eqn:reldecomp}
    \forall x,y. R(x,y) \iff E(x) \land T(R) \land E(y)
\end{equation}

We are assuming that all of the model's newly defined variables in $E$ and $T$ are independent random variables, so Equation~\ref{eqn:reldecomp} implies that
\begin{equation*}
    P(R(x,y)) = P(E(x)) \cdot P(T(R))\cdot P(E(y))
\end{equation*}

Figure~\ref{fig:1dex} gives an example of probabilities for $E$ and $T$, with corresponding probabilities for $R$ subject to Equation~\ref{eqn:reldecomp}. For example, we compute $P(R(A,B))$ by:
\begin{align*}
    P(R(A,B)) &= P(E(A)) \cdot P(T(R)) \cdot P(E(B)) \\
    &= 0.04
\end{align*}
To incorporate a relation $S$, we would define an additional $T(S)$ -- no new random variable per entity is needed. 

There are a few immediate takeaways from the rewrite presented in Equation~\ref{eqn:reldecomp}. Firstly, as a result of sharing dependencies in the model, we no longer have that all triples are independent of each other. For example $R(A,B)$ and $S(A,C)$ are not independent as they share a dependency on the random variable $E(A)$. Secondly, although these tuples are no longer independent (which would normally make query evaluation harder), their connection via new latent variables $E, T$ actually helps us. By assuming the latent $E, T$-tuples to be tuple independent, instead of the non-latent $R,S$-tuples, we are no longer subject to the querying limitations described by Theorem~\ref{thm:relhard}. In fact, \emph{any} UCQ can now be computed efficiently over the relations of interest. This will be proven in Section~\ref{subsec:qeval}, but intuitively binary relations must be involved for Algorithm~\ref{alg:LiftR} to get stuck, and our rewrite allows us to avoid this.

Of course, the major drawback is that Equation~\ref{eqn:reldecomp} describes an incredibly simple and inexpressive embedding model -- we can only associate a single probability with each entity and relation! We address this next.

\begin{figure}[tb]
\centering
\setlength{\tabcolsep}{3pt}
\scalebox{1}{
\begin{tabular}{l  c}
\toprule
 \multicolumn{1}{c}{$E(x)$} & $\Pr$   \\
\midrule
$A$ & 0.2 \\
$B$ & 0.4 \\
$C$ & 0.8 \\
\bottomrule
\end{tabular}
}
\setlength{\tabcolsep}{3pt}
\scalebox{1}{
\begin{tabular}{l  c}
\toprule
 \multicolumn{1}{c}{$T$} & $\Pr$ \\
\midrule
  $R$ & 0.5 \\
\bottomrule
\end{tabular}
}
{\large $\implies$}
\setlength{\tabcolsep}{3pt}
\scalebox{1}{
\begin{tabular}{l l  c}
\toprule
 \multicolumn{2}{c}{$R(x,y)$} & $\Pr$ \\
\midrule
 $A$&$B$ & 0.04 \\
 $B$&$C$ & 0.16 \\
 $A$&$C$ & 0.08 \\
\bottomrule
\end{tabular}
}
\caption{Example model tables $E, T_R$ and a few corresponding predictions for $R$}
\label{fig:1dex}
\vspace{-3mm}
\end{figure}

\subsection{MIXTURES \& \TO}
In a situation such as ours where we have a simple model which is efficient for some task but not expressive, the standard machine learning approach is to employ a mixture model. For example, while tree-shaped graphical models~\citep{Chow1968ApproximatingDP} provide efficient learning and inference, they are limited in their expressive capability: so a commonly used alternative is a mixture of such models~\citep{Meila1998LearningWM}. Similarly, while Gaussians are limited in their expressiveness, mixture of Gaussian models~\citep{titterington_85} have found widespread use throughout machine learning. These mixtures can typically approximate any distribution given enough components.

In our case, we will take the model described in Equation~\ref{eqn:reldecomp} as our building block, and use it to create \TO.

\begin{definition}
  \TOs with $d$ dimensions is a mixture of $d$ models each constructed from Equation~\ref{eqn:reldecomp}. That is, it has tables $T_i, E_i$ analagous to $T$ and $E$ above for each element $i$ of the mixture.  Then, for each element $i$ we have
  \begin{equation*}
    \forall x,y. R_i(x,y) \iff E_i(x) \land T_i(R) \land E_i(y)
  \end{equation*}
  The probability of any query is then given by \TOs as the average of the probabilities of the $d$ mixture components. 
\end{definition}

Figure~\ref{fig:2dex} gives an example 2-dimensional \TOs model, including probabilities for $E_1,E_2,T_1,T_2$, and corresponding probabilities for materialized relation $R$. For example, we compute $P(R(A,B))$ by:
\begin{align*}
    P(R(A,B)) &= \frac{1}{2}(P(E_1(A))\cdot P(T_1(R))\cdot P(E_1(B)) \\
    &+ P(E_2(A))\cdot P(T_2(R))\cdot P(E_2(B))) \\
    &= 0.17
\end{align*}

We see that the components of the mixture form what we typically think of as dimensions of the vectors of embeddings. For example, in Figure~\ref{fig:2dex} the embedding of entity $A$ is $(E_1(A), E_2(A)) = (0.2,0.6)$.


\begin{figure*}[tb]
\vspace{-3mm}
\centering
\setlength{\tabcolsep}{3pt}
\scalebox{1}{
\begin{tabular}{l  c}
\toprule
 \multicolumn{1}{c}{$E_1(x)$} & $\Pr$   \\
\midrule
$A$ & 0.2 \\
$B$ & 0.4 \\
$C$ & 0.8 \\
\bottomrule
\end{tabular}
}
\setlength{\tabcolsep}{3pt}
\scalebox{1}{
\begin{tabular}{l  c}
\toprule
 \multicolumn{1}{c}{$T_1$} & $\Pr$ \\
\midrule
  $R$& 0.5 \\
\bottomrule
\end{tabular}
}
{\large $\ +$}
\setlength{\tabcolsep}{3pt}
\scalebox{1}{
\begin{tabular}{l  c}
\toprule
 \multicolumn{1}{c}{$E_2(x)$} & $\Pr$   \\
\midrule
$A$ & 0.6 \\
$B$ & 0.5 \\
$C$ & 0.2 \\
\bottomrule
\end{tabular}
}
\setlength{\tabcolsep}{3pt}
\scalebox{1}{
\begin{tabular}{l c}
\toprule
 \multicolumn{1}{c}{$T_2$} & $\Pr$ \\
\midrule
  $R$& 1 \\
\bottomrule
\end{tabular}
}
{\large $\implies$}
\setlength{\tabcolsep}{3pt}
\scalebox{1}{
\begin{tabular}{l l  c}
\toprule
 \multicolumn{2}{c}{$R(x,y)$} & $\Pr$ \\
\midrule
 $A$&$B$ & 0.17 \\
 $B$&$C$ & 0.13 \\
 $A$&$C$ & 0.10 \\
\bottomrule
\end{tabular}
}
\caption{Example \TOs model tables $E_1, E_2, T_1, T_2$ and a few corresponding predictions for $R$}
\label{fig:2dex}
\vspace{-3mm}
\end{figure*}
\subsection{EQUIVALENCE TO DISTMULT} \label{subsec:equiv}
The first question we need to ask about \TOs is how effective it is for link prediction.

\begin{theorem}\label{thm:equiv}
 Suppose we have entity embeddings $v_h, v_r \in \mathbb{R}^d$ and relation embedding $v_R \in \mathbb{R}^d$. Then \TOs and DistMult will assign identical scores (within a constant factor) to the triple $(h, R, t)$ (equivalently $R(h,t)$).
\end{theorem}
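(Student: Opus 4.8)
The plan is to evaluate both scoring functions on the same shared vectors and observe that they coincide up to a single multiplicative constant that is independent of the triple. First I would write down the two scores explicitly. From Table~\ref{tab:relscores}, DistMult assigns to $(h,R,t)$ the trilinear score $\langle v_h, v_R, v_t\rangle = \sum_{i=1}^d (v_h)_i\,(v_R)_i\,(v_t)_i$. For \TO, I would unfold the mixture definition: by Equation~\ref{eqn:reldecomp} together with the assumed independence of the latent $E_i, T_i$ variables, each component $i$ contributes $P(E_i(h))\cdot P(T_i(R))\cdot P(E_i(t))$, and the model's score for the triple is the average of these $d$ component probabilities.

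The key step is the identification of coordinates already suggested in the text after Figure~\ref{fig:2dex}: set $(v_h)_i = P(E_i(h))$, $(v_t)_i = P(E_i(t))$, and $(v_R)_i = P(T_i(R))$, so that the $i$-th mixture component is exactly the $i$-th summand of the DistMult inner product. Under this identification \TO's score becomes $\frac{1}{d}\sum_{i=1}^d (v_h)_i\,(v_R)_i\,(v_t)_i = \frac{1}{d}\langle v_h, v_R, v_t\rangle$, i.e.\ DistMult's score scaled by $1/d$. Since this factor does not depend on $h, R, t$, the two models assign identical scores within a constant factor, and in particular induce the same ranking over triples.

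The only real subtlety — which I would be careful to state rather than treat as a genuine obstacle — is the domain mismatch: \TO's coordinates are genuine probabilities in $[0,1]$, whereas DistMult permits arbitrary real entries, including negatives. So the theorem is properly an equivalence of the scoring \emph{function}, realized exactly when the DistMult embeddings are taken non-negative and bounded by one, with the averaging weight $1/d$ supplying the promised constant. I would close by noting that this computation both explains why \TO should inherit DistMult's empirical link-prediction quality and motivates the later expressiveness discussion, since confining the embeddings to $[0,1]$ is precisely the price paid for the probabilistic semantics and the resulting query tractability.
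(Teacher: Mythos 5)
Your proposal is correct and is exactly the argument the paper intends (the paper in fact omits an explicit proof): identify the $i$-th mixture component's probability $P(E_i(h))\,P(T_i(R))\,P(E_i(t))$ with the $i$-th summand of DistMult's trilinear product, so that averaging over components yields $\tfrac{1}{d}\langle v_h,v_R,v_t\rangle$. Your remark on the domain mismatch is also the right caveat, and it is precisely what the paper's subsequent ``Positive and Negative Weights'' discussion addresses.
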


We already know from~\citet{Yang2014EmbeddingEA} that DistMult is effective for link prediction, so \TOs must also be.

\subsubsection{Positive and Negative Weights}
While we have seen that the computation used for link prediction in \TOs is identical to that of DistMult, there remains a key difference: \TOs has a probabilistic semantics, and thus all parameters must be probabilities. One option here is to indeed force all parameters to be positive, and live with any performance loss incurred. Another option is allowing for negative probabilities in $E, T$ meaning that we can achieve exactly the same link prediction results as DistMult, whose predictive power is well documented~\citep{Yang2014EmbeddingEA}. It has been previously shown that probability theory can be consistently extended to negative probabilities~\citep{bartlett_1945}, and their usefulness has also been documented in the context of probabilistic databases~\citep{Jha2012ProbabilisticDW, VdBKR14}. Furthermore, by adding a simple disjunctive bias term, we can ensure that all fact predictions are indeed positive probabilities. In Section~\ref{sec:eval} we will explore both options.

\subsection{QUERY EVALUATION} \label{subsec:qeval}
Finally, we explore query evaluation for the \TOs model. Suppose we have some arbitrary UCQ $\Query$ over binary and unary relations, and we would like to compute $P(\Query)$ where all binary relations are given by a \TOs model. First, we substitute each binary relation according to Equation~\ref{eqn:reldecomp} using \TOs tables $E$ and $T$. What remains is a query $\Query'$ which contains only unary relations. 

\begin{theorem}\label{thm:unary}
   Suppose that $\Query'$ is a UCQ consisting only of unary relations. Then $\Query'$ is safe. 
\end{theorem}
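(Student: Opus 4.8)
The plan is to prove safety directly through the dichotomy of \citet{Dalvi2012TheDO} quoted above: since every UCQ is either safe or unsafe, and a query is safe exactly when Algorithm~\ref{alg:LiftR} terminates without reaching the Fail case (Step~6), it suffices to show that $\mathbf{Lift}^R$ never reaches Step~6 when run on a UCQ all of whose atoms are unary. I would argue this by induction on a size measure of the query (number of atoms, with the domain size controlling the quantifier step), checking that at every recursive call one of Steps~0--5 applies and that each subquery generated by the recursion is again a strictly smaller unary UCQ: the conjuncts produced by the rewriting of Step~1, the disjunctions formed by inclusion--exclusion in Step~3, and the substituted queries $\Query[x/c]$ of Step~5 are all built from the original unary atoms, so unarity is preserved along the whole recursion.

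The crucial observation is that unarity trivializes the obstruction that makes binary queries hard. A separator variable in Step~5 must occur in every atom \emph{and} in a consistent position across all occurrences of each predicate; for a unary predicate there is only one position, so the positional requirement is vacuous and ``separator'' collapses to ``a variable occurring in every atom'' (after the standard unification of the bound variables across the disjuncts of the UCQ). Equivalently, in any conjunctive component the atom-sets $at(x)$ and $at(y)$ of two distinct variables are always \emph{disjoint}, because a unary atom mentions exactly one variable; this is precisely the hierarchical condition underlying the dichotomy of \citet{Dalvi2012TheDO}, satisfied here for free.

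With this in hand I would walk through the algorithm. Step~0 disposes of ground atoms. A conjunction of variable-disjoint components is split by Step~2 when the components share no predicate, and otherwise resolved by the inclusion--exclusion of Step~3, which --- because those conjuncts carry disjoint quantified variables --- reduces the query to unions of the same components and recurses. A single connected UCQ that survives Steps~1--4 must, I claim, consist of components each of whose atoms share a single variable; unifying these across the disjuncts yields a variable present in every atom, i.e.\ a separator, so Step~5 fires rather than Step~6. Chasing the substituted query $H_0' = \exists x,y.\ R_A(x) \land E(x) \land E(y) \land T_B(y)$ arising from Equation~\ref{eqn:reldecomp} is a good sanity check: it is not independence-decomposable (the two halves share the latent predicate $E$), so Step~3 rewrites it through $\bigl(\exists x.\ R_A(x)\land E(x)\bigr) \lor \bigl(\exists y.\ E(y)\land T_B(y)\bigr)$, whose bound variables unify into a single separator.

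The main obstacle is exactly this last claim: proving that the only unary UCQs forced into Step~5 are (unions of) single-variable conjunctions, so that a separator is always exposed. The delicate point is the self-joins on the shared latent predicate $E$ introduced by Equation~\ref{eqn:reldecomp}: these repeated predicates block the naive independence splits of Steps~2 and~4, so I must show that query minimization together with the inclusion--exclusion of Step~3 always peels the query down to strictly smaller unary UCQs without ever stranding a multi-variable component that lacks a common variable (this also secures termination of the recursion). As an independent cross-check of the statement, and a cleaner if less constructive route, I would note the purely semantic argument: the truth of a unary UCQ depends only on which ``types'' $\tau(a) = (U_1(a),\dots,U_n(a))$ are realized in the database, and the number of predicates $n$ is a constant independent of $|\pdb|$, so one can compute $P(\Query')$ in time polynomial in $|\pdb|$ by combining the per-element type distributions --- which, via the dichotomy, again forces $\Query'$ to be safe.
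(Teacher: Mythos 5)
Your proposal is correct and takes essentially the same route as the paper's own proof: show that Algorithm~\ref{alg:LiftR} can never reach the Fail step, because with only unary atoms any connected conjunctive component must contain a single quantified variable (two distinct variables would split it into independent components), so whenever Step~5 is reached that variable is a separator. The paper states this in four sentences where you spell out the recursion-preservation and positional details; your closing type-counting argument is an additional independent justification not present in the paper, but the core argument is identical.
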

\begin{proof}
We prove this by showing that Algorithm~\ref{alg:LiftR} never fails on $\Query'$. Consider if $\Query'$ cannot be rewritten as a conjunction of UCQs. Then each CQ must contain only a single quantified variable, or else that CQ would contain 2 separate connected components (due to all relations unary). Thus, if we ever reach Step 5 of Algorithm~\ref{alg:LiftR}, each CQ must have a separator. So $\Query'$ is safe.


\end{proof}

\section{EMPIRICAL EVALUATION} \label{sec:eval}
We will now empirically investigate the effectiveness of \TOs as a relational embedding model. As discussed in Section~\ref{subsec:equiv}, for the purposes of link prediction \TOs actually turns out to be equivalent to DistMult. While it does have certain limitations regarding asymmetric relations, the overall effectiveness of DistMult for link prediction has been well documented~\citep{Yang2014EmbeddingEA}, so we will not be evaluating \TOs on link prediction. Instead, we will focus on evaluating \TO's performance when computing more advanced queries.\footnote{Code is available at \url{https://github.com/ucla-starai/pdbmeetskge}} While training the models we evaluated, we confirmed that training \TOs and DistMult produced the same embeddings and link prediction performance.

\subsection{QUERIES \& COMPARISON TARGET}
As our comparison for evaluation, we will use the graph query embeddings (GQE)~\citep{Hamilton2018EmbeddingLQ} framework and evaluation scheme. Fundamentally, GQE differs from \TOs in its approach to query prediction. Where \TOs is a distribution representing beliefs about the world which can then be queried to produce predictions, GQE treats queries as their own separate prediction task and defines vector operations to specifically be used for conjuctive query prediction. The consequence of this is that where \TOs has a single correct way to answer any query (the answer induced by the probability distribution), a method in the style of GQE needs to find a new set of linear algebra tools for each type of query. 

In particular, GQE uses geometric transformations as representations for conjunction and existential quantifiers, allowing it to do query prediction via repeated application of these geometric transformations. \citet{Hamilton2018EmbeddingLQ} detail further exactly which queries are supported, but put simply it is any conjunctive query that can be represented as a directed acyclic graph with a single sink.

To evaluate these models, the first question is which queries should be tested. We describe a query template as follows: $R,S,T$ are placeholder relations, $A,B,C$ placeholder constants, $x,y,z$ quantified variables, and $t$ is the parameterized variable. That is, the goal of the query is to find the entity $t$ which best satisfies the query (in our framework gives the highest probability). Table~\ref{tab:queries} gives a series of example template CQs and UCQs. In Figure~\ref{fig:query_cat}, we categorize each of these query templates based on their hardness under standard probabilistic database semantics, as well as their compatibility with GQE. Notice that \TOs can compute all queries in Figure~\ref{fig:query_cat} in time linear in the domain size, including queries $\Query_4, \Query_{11}, \Query_{10}$ which would be $\#P$-hard in a standard tuple-independent probabilistic database. For the sake of comparison, we perform our empirical evaluation using the queries that are also supported by GQE.

\begin{table}[t] \label{tab:queries}
    \caption{Example CQs and UCQs}
\renewcommand{\arraystretch}{1.25}
    \centering
    \begin{tabular}{rl}
    \hline
    \hline
         $\Query_1(t) =$ &$ R(A,t)$ \\
         $\Query_2(t) =$ &$\exists x. R(A,x)$ \\
         $\Query_3(t) = $ &$\exists x. R(A,x) \land S(x,t)$ \\
         $\Query_4(t) =$ &$ \exists x,y. R(A,x) \land S(x,y) \land T(y,t)$ \\
         $\Query_5(t) =$ &$ R(A,t) \land S(B,t)$ \\
         $\Query_6(t) =$ &$ R(A,t) \land S(B,t) \land T(C,t)$ \\
         $\Query_7(t) =$ &$ \exists x. R(A,x) \land S(x,t)$ \\
         &$\lor \exists y. R(A,y) \land T(y,t)$ \\
         $\Query_8(t) =$ &$ \exists x. R(A,x) \land S(x,t) \land T(B,t)$ \\
         $\Query_9(t) =$ &$ \exists x. R(A,x) \land S(B,x) \land T(x,t)$ \\
         $\Query_{10}(t) =$ &$ \exists x_1,y_1. R(A,x_1) \land S(x_1,y_1)$ \\ 
         &$\lor \exists x_2,y_2.  S(x_2,y_2) \land T(y_2, t)$ \\
         $\Query_{11}(t) =$ &$ \exists x,y,z. R(A,x) \land S(x,y) \land T(y,z)$ \\
        \hline 
        \hline
    \end{tabular}
\end{table}

\subsection{DATASET}
For our dataset, we use the same choice in relational data as \citet{Hamilton2018EmbeddingLQ}. In that work, two datsets were evaluated on, which were termed bio and reddit respectively. Bio is a dataset consisting of knowledge from public biomedical databases, consisting of nodes which correspond to drugs, diseases, proteins, side effects, and biological processes. It includes 42 different relations, and the graph in total has over 8 million edges between 97,000 entities. The reddit dataset was not made publicly available so we were unable to use it for evaluation. 

\subsubsection{GENERATING QUERIES}
While the bio dataset provides our entities and relations, we need to create a dataset of conjunctive queries to evaluate on. For this, we again follow the procedures from \citet{Hamilton2018EmbeddingLQ}. First, we sample a 90/10 train/test split for the edges in the bio data. Then, we generate evaluation queries (along with answers) using both train and test edges from the bio dataset, but sample in such a way that each test query relies on at least one edge not present in the training data. This ensures that we can not template match queries based on the training data. For each query template we sample 10,000 queries for evaluation. For further details, including queries for which some edges are adversarially chosen, see \citet{Hamilton2018EmbeddingLQ}.

As an example, templating on $\Query_4$ can produce:
\begin{align*}
  D_? \exists p_1 \exists p_2 & \textsc{Activates}(P_3,p_2) \land \textsc{Catalyzes}(p_2,p_1) \\
                           & \land \textsc{Target}(p_1,D)
\end{align*}

where $D$ is the drug we would like to find and $p_1,p_2,P_3$ are proteins.
\begin{figure}[tb]
    \centering
    \includegraphics[width=\columnwidth]{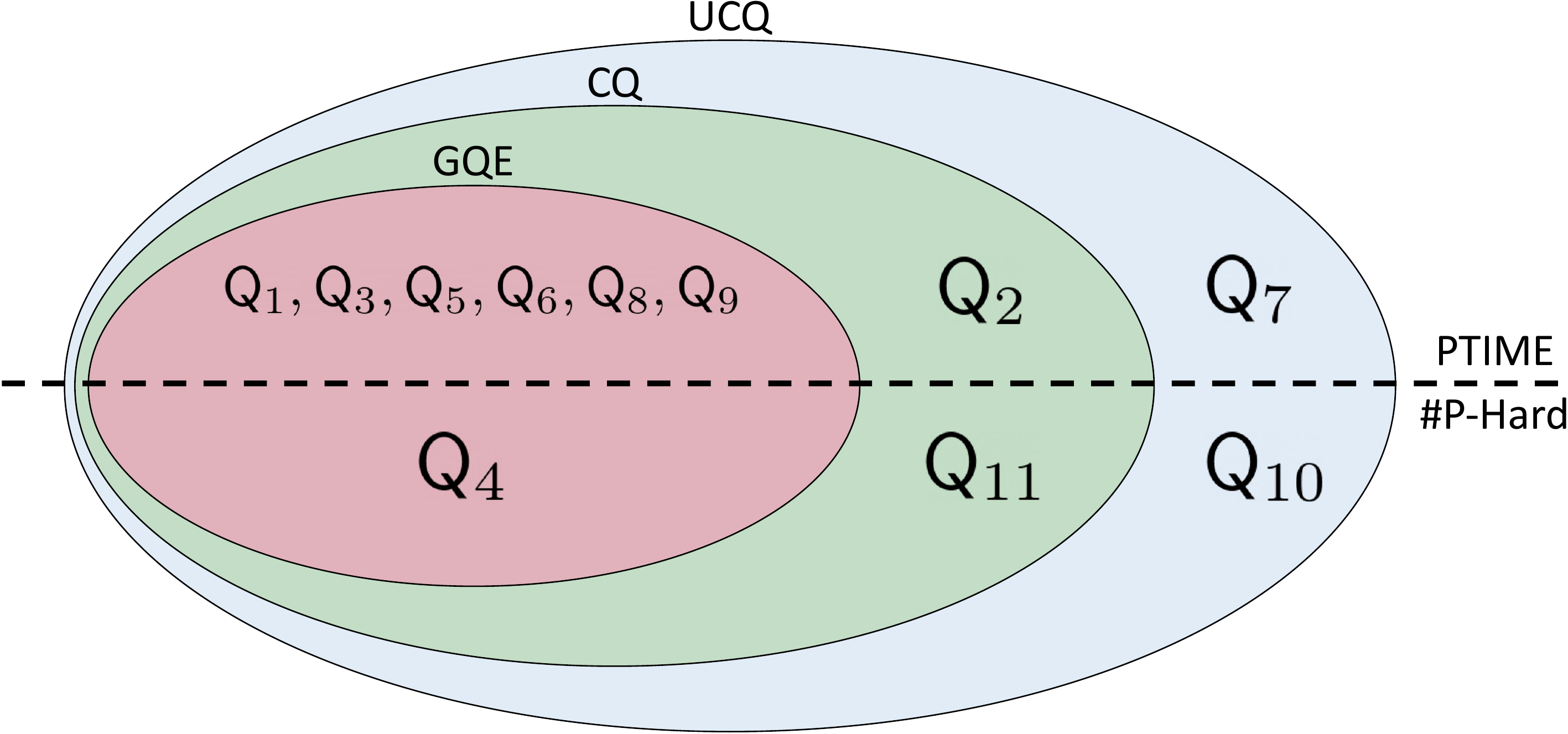} 
    \vspace{-2mm}
    \caption{Categorizing different queries based on safeness and compatibility with GQE \citep{Hamilton2018EmbeddingLQ}. \TOs efficiently supports all queries in the diagram.}
    \label{fig:query_cat}
    \vspace{-3mm}
\end{figure}
\subsection{EVALUATION}
For each evaluation query, we ask the model being evaluated to rank the entity which answers the query in comparison to other entities which do not. We then evaluate the performance of this ranking using a ROC AUC score, as well as an average percentile rank (APR) over 1000 random negative examples.
\subsubsection{Baselines and Model Variants}
We evaluate two versions of our model: \TOs indicates a model where the unary predicate probabilities are allowed to be negative, and a bias term is added to ensure all triples have positive predicted probability. \TOPs indicates a model where unary predicate probabilities are constrained to be positive via squaring. 

As baselines, we consider model variants from \citet{Hamilton2018EmbeddingLQ} that do not include extra parameters that must be trained on queries, as our model contains no such parameters. These models are each built on an existing relational embedding model (Bilinear~\citep{Nickel2011ATM}, DistMult~\citep{Yang2014EmbeddingEA}, and TransE~\citep{Bordes2013TranslatingEF} respectively) used for link prediction and composition, as well as a mean vector operator used for queries. For example, for query $\Query_5$, these baselines will make a prediction for $t$ that satisfy $R(a,t)$ and $S(b,t)$ separately, and then take the mean of the resulting vectors.

\begin{table}[t]
    \caption{Overall query performance on bio dataset}
\renewcommand{\arraystretch}{1.25}
    \centering
    \begin{tabular}{ccc}
        Method & AUC & APR \\
        \hline \hline 
        Bilinear & 79.2 & 78.6 \\
        DistMult & 86.7 & 87.5 \\
        TransE & 78.3 & 81.6 \\
        \hline
        \TOP & 75.0 & 84.5 \\
        \TO & 82.8 & 86.3 \\
    \end{tabular}
    \label{tab:res}
    \vspace{-3mm}
\end{table}

\subsubsection{Training}
All model variants and baselines were trained using the max-margin approach with negative sampling \citep{Mikolov2013DistributedRO} which has become standard for training relational embedding models~\citep{Nickel2015ARO}. Parameters were optimized using the Adam optimizer~\citep{Kingma2014AdamAM}, with an embedding dimension of 128, a batch size of 256, and learning rate of $0.01$.

\subsubsection{Results \& Discussion}
Table~\ref{tab:res} presents AUC and APR scores for all model variants and baselines on the bio dataset. \TOs and \TOPs both perform better than TransE and Bilinear based baselines in APR, and are competitive with the DistMult baseline. Evaluating by AUC the performance is slightly worse, but \TOs remains better than or comparable to all baselines. These results are very encouraging as \TOs is competitive despite the fact that it is representing much more than just conjunctive query prediction. \TOs represents a complete probability distribution: effective and efficient query prediction is simply a direct consequence.

Another interesting observation to make here is the gap between \TOs and \TOP, where the only difference is whether the parameters are constrained to be positive. The difference in performance here essentially comes down to the difference in performance on link prediction: not being allowed to use negative values makes the model both less expressive and more difficult to train, leading to worse performance on link prediction. We did not find that increasing the number of dimensions used in the representation to make up for not having negative values helped significantly. Finding ways to improve link prediction subject to this constraint seems to be valuable for improving performance on query prediction.


          

\section{DISCUSSION \& RELATED WORK}\label{sec:disc}
\paragraph{Querying Relational Embeddings}
Previous work studying queries beyond link prediction in relational embedding models proposed to replace logical operators with geometric transformations~\citep{Hamilton2018EmbeddingLQ}, and learning new relations representing joins~\citep{Krompass2014QueryingFP}. Our work differs from these in that we formalize an underlying probabilistic framework which defines algorithms for doing querying, rather than treating querying as a new learning task.
\paragraph{Symmetric Relations}
A limitation of the \TOs model which also appears in models like DistMult \citep{Yang2014EmbeddingEA} and TransE \citep{Bordes2013TranslatingEF} is that since head and tail entities are treated the same way, they can only represent symmetric relations. This is, of course, problematic as many relations we encounter in the wild are not. Solutions to this include assigning complex numbers for embeddings with an asymmetric scoring function~\citep{Trouillon2016ComplexEF}, and keeping separate head and tail representations but using inverse relations to train them jointly~\citep{Kazemi2018SimplEEF}. Borrowing these techniques presents a straightforward way to extend \TOs to represent asymmetric relations.

\paragraph{Probabilistic Training}
One potential disconnect in \TOs is that while it is a probabilistic model, it is not trained in a probabilistic way. That is, it is trained in the standard fashion for relational embedding models using negative sampling and a max-margin loss. Other training methods for these models such as cross-entropy losses exist and can improve performance~\citep{Ruffinelli2020You} while being more probabilistic in nature. In a similar vein, \citet{Tabacof2020Probability} empirically calibrates probabilities to be meaingful with respect to the data.  An interesting open question is if \TOs can be trained directly using a likelihood derived from its PDB semantics.

\paragraph{Incomplete Knowledge Bases}
One of the main goals of this work is to overcome the common issue of \emph{incomplete} knowledge. That is, what do we do when no probability at all is known for some fact. In this work, we directly incorporate machine learning models to overcome this. Another approach to this problem is to suppose a range of possibilities for our unknown probabilities, and reason over those. This is implemented via open-world probabilistic databases~\citep{CeylanKR16}, with extensions to incorporate background information in the form of ontological knowledge~\citep{DBLP:conf/aaai/BorgwardtCL19} and summary statistics~\citep{FriedmanIJCAI19}.


\paragraph{Increasing Model Complexity}
\TOs is a mixture of very simple models. While this makes for highly efficient querying, accuracy could potentially be improved by rolling more of the complexity into each individual model at the PDB level. The natural approach to this is to follow \citet{Broeck2013OnTC} and replace our simple unary conjunction with a disjunction of conjunctions. This raises interesting theoretical and algorithmic questions with potential for improving query prediction.

\paragraph{Further Queries}
Finally, there are further question one can ask of a PDB beyond the probability of a query. For example, \citet{gribkoff2014most} poses the question of which world (i.e. configuration of tuple truths) is most likely given a PDB and some constraints, while \citet{Ceylan2017MostPE} studies the question of which explanations are most probable for a certain PDB query being true. Extending these problems to the realm of relational embeddings poses many interesting questions.

\subsubsection*{Acknowledgements}
We thank Yitao Liang, YooJung Choi, Pasha Khosravi, Dan Suciu, and Pasquale Minervini for helpful feedback and discussion. This work is partially supported by NSF grants \#IIS-1943641, \#IIS-1633857, \#CCF-1837129, DARPA XAI grant \#N66001-17-2-4032, and gifts from Intel and Facebook Research.

\small
\bibliographystyle{plainnat}
\bibliography{ref}

\end{document}